\documentclass{article}

\PassOptionsToPackage{numbers, compress}{natbib}



\usepackage[final]{neurips_2023}


\usepackage[utf8]{inputenc} 
\usepackage[T1]{fontenc}    
\usepackage{hyperref}       
\usepackage{url}            
\usepackage{booktabs}       
\usepackage{amsfonts}       
\usepackage{nicefrac}       
\usepackage{microtype}      
\usepackage{xcolor}         

\usepackage[bb=dsserif]{mathalpha}
\usepackage{multirow}
\usepackage{multicol}
\usepackage{comment}
\usepackage[normalem]{ulem}
\usepackage{enumitem}
\setlist[itemize]{align=parleft,left=3pt}
\usepackage{graphicx}
\usepackage{subfigure}
\usepackage{amsmath}
\usepackage{amssymb}
\usepackage{mathtools}
\usepackage{amsthm}
\usepackage{wrapfig}
\usepackage{floatrow}
\newtheorem{theorem}{Theorem}[section]

\newtheorem{definition}[theorem]{Definition}
\newtheorem{assumption}[theorem]{Assumption}

\title{Temporal Robustness against Data Poisoning}

%

\author{%
  Wenxiao Wang\\
  Department of Computer Science\\
  University of Maryland\\
  College Park, MD 20742 \\
  \texttt{wwx@umd.edu} \\
  \And
  Soheil Feizi \\
  Department of Computer Science\\
  University of Maryland\\
  College Park, MD 20742 \\
  \texttt{sfeizi@umd.edu} \\
}

\begin{document}

\maketitle

\begin{abstract}
Data poisoning considers cases when an adversary manipulates the behavior of machine learning algorithms through malicious training data. Existing threat models of data poisoning center around a single metric, the number of poisoned samples.
In consequence, if attackers can poison more samples than expected with affordable overhead, as in many practical scenarios, they may be able to render existing defenses ineffective in a short time.
To address this issue, we leverage timestamps denoting the birth dates of data, which are often available but neglected in the past.
Benefiting from these timestamps, we propose a temporal threat model of data poisoning with two novel metrics, earliness and duration, which respectively measure how long an attack started in advance and how long an attack lasted.
Using these metrics, we define the notions of temporal robustness against data poisoning, providing a meaningful sense of protection even with unbounded amounts of poisoned samples when the attacks are temporally bounded.
We present a benchmark with an evaluation protocol simulating continuous data collection and periodic deployments of updated models, thus enabling empirical evaluation of temporal robustness. 
Lastly, we develop and also empirically verify a baseline defense, namely temporal aggregation, offering provable temporal robustness and highlighting the potential of our temporal threat model for data poisoning.
\end{abstract}

\section{Introduction}
\label{sec:intro}

\begin{figure*}[!tb]
\begin{center}
\includegraphics[width=0.95\linewidth]{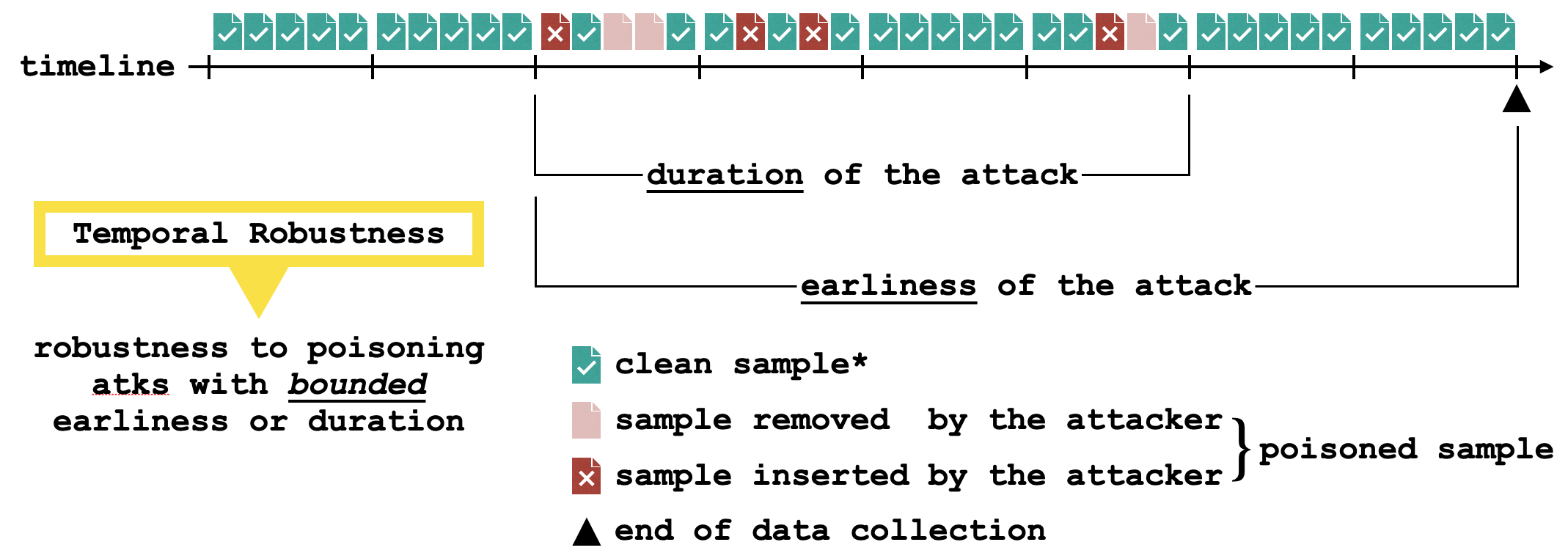}
\caption{An illustration of our temporal modeling of data poisoning. Every sample, no matter whether it is clean or poisoned, is associated with a birth date, i.e. a timestamp denoting when the sample becomes available for collection. Two temporal metrics are proposed to characterize a data poisoning attack: \textbf{Earliness}, how long the poisoning attack started in advance, is determined by the earliest timestamp of all poisoned samples;
\textbf{Duration}, how long the poisoning attack lasted, is defined as the time passed from the earliest to the latest timestamps of all poisoned samples.}
\label{fig:temporal_poisoning}
\end{center}
\end{figure*}

Many applications of machine learning rely on training data collected from potentially malicious sources,
which motivates the study of data poisoning.
Data poisoning postulates an adversary who manipulates the behavior of machine learning algorithms \cite{DatasetSecurity} through malicious training data. 

 A great variety of poisoning attacks have been proposed to challenge the reliability of machine learning with unreliable data sources, including triggerless attacks \cite{PoisonSVM, Luis, KSL17, FeatureCollision, ConvexPolytope, Bullseye, Brew} and backdoor attacks \cite{TargetedBackdoor, LC_backdoor, HT_backdoor, freq_trigger, sleeper_agent, Narcissus}, depending on whether input modifications are required at inference time. At the same time, diverse defenses are proposed, some of which aim at detecting/filtering poisoned samples \cite{SteinhardtKL17, DiakonikolasKK019, AnomalyDetect, SpectralSignature, NeuralCleanse, ASSET_detect, SiftOut}, some of which aim at robustifying models trained with poisoned samples \cite{RAB, BackdoorRS, RS_LabelFlipping, DP_defense, GradientShaping, DPA, RobustNN, Bagging, RandomSelection, Adv_Unlearning, CollectiveRobust, FA, ROE, PracticalAggregationDef, LethalDose}. 

The number of poisoned samples is absolutely a key metric among existing threat models of data poisoning, and defenses are typically designed for robustness against bounded numbers of poisoned samples. 
However, the number of poisoned samples defenses may tolerate can be quite limited (e.g. $\leq 0.1\%$ of the dataset size), as supported empirically by \cite{PoisonSemiSupervised, PoisonContrastive, sleeper_agent, Brew,  PoisonWebScaleTrainSet} and theoretically by \cite{LethalDose}.
Furthermore, an explicit bound on the number of poisoned samples is not available in many practical scenarios and therefore it can be possible for attacks to render existing defenses ineffective in a short time. For example, when data are scraped from the internet, an attacker can always post more poisoned samples online; when data are contributed by users, a single attacker may pose itself as multiple users to poison a lot more training samples. In these cases, the robustness notions in existing threat models of data poisoning fail to provide a meaningful sense of protection.

Is there any option other than the number of poisoned samples? Yes! When collecting data, there are usually corresponding timestamps available but neglected previously in modeling data poisoning. 
For instance, a company can keep track of the actual creation time of user data; online contents often come with a publication date indicating when they are posted. 
While attackers may create poisoned samples of many kinds, they typically have no direct way to set the values of these timestamps. In addition, temporally bounded poisoning attacks have been proposed in a couple specific scenarios \cite{rubinstein2009antidote, PoisonWebScaleTrainSet}. These motivate us to use time as a new dimension for defenses against data poisoning. 

To summarize, we make the following contributions:
\begin{itemize}
    \item proposing a \textbf{novel threat model} for data poisoning along with the notions of \textbf{temporal robustness}, providing meaningful protections even with potentially unbounded amounts of poisoned samples when the attacks are temporally bounded;
    \item designing a \textbf{benchmark} with an evaluation protocol that simulates continuous data collection with periodic model updates, enabling empirical evaluation of temporal robustness;
    \item developing and verifying a \textbf{baseline defense} with \textbf{provable} temporal robustness.
\end{itemize}

\section{Temporal Modeling of Data Poisoning}

\subsection{Background: Threat Models that Center Around the Number of Poisoned Samples}

Here we recap a typical threat model depending heavily on the number of poisoned samples.

Let $\mathcal{X}$ be the input space, $\mathcal{Y}$ be the label space and 
$\mathcal{D} = (\mathcal{X} \times \mathcal{Y}) ^ \mathbb{N}$ be the space of all possible training sets.
Let $f: \mathcal{D} \times \mathcal{X} \to \mathcal{Y}$ be a learner that maps an input to a label conditioning on a training set. 
Given a training set $D\in \mathcal{D}$, a learner $f: \mathcal{D} \times \mathcal{X} \to \mathcal{Y}$ and a target sample $x_0 \in \mathcal{X}$ with its corresponding ground truth label $y_0 \in \mathcal{Y}$, the attack succeeds and the defense fails when a poisoned training set $D' \in \mathcal{D}$ is found such that $D \oplus D' \leq \tau$ and $f(D', x_0) \neq y_0$. 
Here $D \oplus D' = |(D\setminus D') \cup (D'\setminus D)|$ is the symmetric distance between two training sets, which is equal to the minimum number of insertions and removals needed to change $D$ to $D'$. $\tau$ is the attack budget denoting the maximum number of poisoned samples allowed.

\subsection{Proposal: Temporal Modeling of Data Poisoning}
\label{sec:temporal_modeling}

In existing threat models for data poisoning, the key characteristic of an attack is the number of poisoned samples:
Attacks try to poison (i.e. insert or remove) fewer samples to manipulate model behaviors and
defenses try to preserve model utility assuming a bounded number of poisoned samples.

As a result, if attackers can poison more samples than expected with affordable overhead, which is possible in many practical scenarios, they may render existing defenses ineffective in a short time. Towards addressing this issue, we incorporate the concepts of time in modeling data poisoning.

\begin{definition}[Timeline]
    The timeline $T$ is simply the set of all integers $\mathbb{Z}$, where smaller integers denote earlier points in time and the difference of two integers is associated with the amount of time between them.
\end{definition}

Notably, we use notions of discrete timelines in this paper.

\begin{definition}[Data with Birth Dates]
Let $\mathcal{X}$ be the input space, $\mathcal{Y}$ be the label space, every sample is considered to have the form of $(x, y, t)$, where $x\in \mathcal{X}$ is its input, $y\in\mathcal{Y}$ is its label and $t\in T$ is a timestamp, namely its birth date, denoting when the sample becomes available for collection. We use $\mathcal{D}_T = \left(\mathcal{X}\times \mathcal{Y} \times T\right)^\mathbb{N}$ to denote the space of all possible training sets with timestamps.
\end{definition}

In practice, the birth dates of samples are determined in the data collection process. For example, a company may set the birth dates to the actual creation time of its users' data; When scraping data from the internet, one may use upload dates available on third-party websites; In cases of continuous data collection, the birth dates can also be simply set to the time that samples are first seen. 

\begin{definition}[Temporal Data Poisoning Attack]
    Let $D_\text{clean} \in \mathcal{D}_T$ be the clean training set, i.e. the training set that would be collected if there were no attacker.
    The attacker can insert a set of samples $D_\text{insert} \in \mathcal{D}_T$ to the training set and remove a set of samples $D_\text{remove} \subseteq D_\text{clean}$ from the training set, following Assumption \ref{assumption:birth_date} but with no bound on the numbers of samples inserted or removed. 
    Let $D' = (D_\text{clean} \setminus D_\text{remove}) \cup D_\text{insert}$ be the resulted, poisoned training set and $f: \mathcal{D}_T \times \mathcal{X} \to \mathcal{Y}$ be a learner that maps an input to a label conditioning on a training set with timestamps.
    Given a target sample $x_0 \in \mathcal{X}$ with a corresponding ground truth label $y_0 \in \mathcal{Y}$, the attack succeeds and the defense fails when $f(D', x_0) \neq y_0$. 
\end{definition}

\begin{assumption}[Partial Reliability of Birth Dates]
The attacker has \textbf{no} capability to \textbf{directly} set the birth dates of samples.
\label{assumption:birth_date}
\end{assumption}

In another word, Assumption \ref{assumption:birth_date} means that no poisoned sample can have a birth date falling outside the active time of the attacker: 
\begin{itemize}
\item If an attack starts at time $t_0$, no poisoned sample can have a timestamp smaller than $t_0$ (i.e. $t\geq t_0$ for all $(x,y,t)\in D_\text{remove}\cup D_\text{insert}$); 
\item If an attack ends at time $t_1$, no poisoned sample can have a timestamp larger than $t_1$ (i.e. $t\leq t_1$ for all $(x,y,t)\in D_\text{remove}\cup D_\text{insert}$).
\end{itemize}
It is worth noting that this does not mean that the attacker has no control over the birth dates of samples. For instance, an attacker may 
inject poisoned samples at a specific time to control their birth dates or 
change the birth dates of clean samples by blocking their original uploads (i.e. removing them from the training set) and re-inserting them later.

With the above framework, we now introduce two temporal metrics, namely \textbf{earliness} and \textbf{duration}, that are used as attack budgets in our temporal modeling of data poisoning. An illustration of both concepts is included in Figure \ref{fig:temporal_poisoning}.

\begin{definition}[Earliness]
Earliness measures how long a poisoning attack started in advance (i.e. how early it is when the attack started). 
The earliness of an attack is defined as
\begin{align*}
    \tau_\text{earliness} = 
    t_\text{end} - \left(\min_{
    (x,y,t) \in D_\text{poison}
    }{t}\right) + 1,
\end{align*}
where $t_\text{end}$ is the time when the data collection ends, $D_\text{poison} = D_\text{remove} \cup D_\text{insert}$ is the set of poisoned samples, containing all samples that are inserted or removed by the attacker.
The earliness of an attack is determined by the poisoned sample with the earliest birth date.
\label{def:earliness}
\end{definition}

\begin{definition}[Duration]
Duration measures how long a poisoning attack lasted. 
The duration of an attack is defined as 
\begin{align*}
    \tau_\text{duration} = 
    \left(\max_{
    (x,y,t) \in D_\text{poison}
    }{t}\right) - \left(\min_{
    (x,y,t) \in D_\text{poison}
    }{t}\right) + 1,
\end{align*}
where $D_\text{poison} = D_\text{remove} \cup D_\text{insert}$ is the set of all poisoned samples.
The duration of an attack is determined by the time passed from the earliest birth date of poisoned samples to the latest one.
\label{def:duration}
\end{definition}

These notions for earliness and duration of attacks enable the description of the following new form of robustness that is never available without incorporating temporal concepts.

\begin{definition}[Temporal Robustness against Data Poisoning]
Let $D_\text{clean}\in \mathcal{D}_T$ be the clean training set and $f: \mathcal{D}_T \times \mathcal{X} \to \mathcal{Y}$ be a learner that maps an input to a label conditioning on a training set with timestamps. For a given target sample $x_0\in \mathcal{X}$ with the ground truth label $y_0$, 
\begin{itemize}
\item $f$ is robust against data poisoning with a maximum \underline{earliness} of $\tau$ if for all $D_\text{insert}$ and $D_\text{remove}$ with $\uline{\tau_\text{earliness}}\leq \tau$, 
$f((D_\text{clean} \setminus D_\text{remove}) \cup D_\text{insert}, x_0) = y_0$
.

\item $f$ is robust against data poisoning with a maximum \underline{duration} of $\tau$ if 
for all $D_\text{insert}$ and $D_\text{remove}$ with $\uline{\tau_\text{duration}}\leq \tau$,
$
f((D_\text{clean} \setminus D_\text{remove}) \cup D_\text{insert}, x_0) = y_0$.
\end{itemize}
\label{def:temporal_robustness}
\end{definition}

Informally, a learner or a defense offers temporal robustness against data poisoning if all poisoning attacks with earliness or duration bounded by $\tau$ cannot manipulate its behavior. 

Temporal robustness against data poisoning can be more desirable than existing notions, especially for scenarios where attackers may poison more and more samples with affordable overhead: The number of poisoned samples is no longer a good measure of attack efforts, but the proposed concepts of earliness and duration are closely tied to the difficulty of conducting the attack as long as time travels remain fictional.

\section{A Temporal Robustness Benchmark}
\label{sec:benchmark}

In this section, we present a benchmark to support empirical evaluations of temporal robustness by simulating continuous data collection and periodic deployments of updated models.

\subsection{Dataset}

We use News Category Dataset \cite{HuffPost_dataset, misra2021sculpting} as the base of our benchmark, which contains news headlines from 2012 to 2022 published on HuffPost (https://www.huffpost.com).
In News Category Dataset, every headline is associated with a category where it was published and a publication date.
Notably, this setting is well aligned with our Assumption \ref{assumption:birth_date}, Partial Reliability of Birth Dates, as a potential attacker may be capable of submitting a poisoned headline but cannot set a publication date that is not its actual date of publication.

\begin{wraptable}{r}{0.5\textwidth}
\caption{Examples of headlines with categories and publication dates from News Category Dataset \cite{HuffPost_dataset}.}
\label{tab:example_headlines}
\centering
\resizebox{1\linewidth}{!}{
\begin{tabular}{lcc}
\toprule
Headline & Category & Date \\
\midrule
The Good, the Bad and the Beautiful & WELLNESS & 2012-02-08\\
Nature Is a Gift & FIFTY & 2015-04-29\\
How To Overcome Post-Travel Blues & TRAVEL & 2016-06-26\\
Who Is The Real Shande Far Di Goyim? & POLITICS & 2017-08-27 \\
\bottomrule
\end{tabular}
}
\end{wraptable}

Following the recommendation by \cite{HuffPost_dataset}, we use only samples within the period of 2012 to 2017 as HuffPost stopped maintaining an extensive archive of news articles in 2018.
To be specific, we preserve only news headlines dated from 2012-02-01 to 2017-12-31 and the resulting dataset contains in total $191939$ headlines and $41$ categories, temporally spanning over $n_\text{month} = 71$ months. For better intuitions, we include examples of headlines with categories and publication dates in Table \ref{tab:example_headlines} and the counts of headlines in each month in Figure \ref{fig:count_samples}. 
In addition, while natural distribution shifts over time are not the main focus of this work, it is worth noting that such natural shifts exist evidently, as shown in Figure \ref{fig:category_lifespan}, where we can observe that most categories are only active for a subset of months. 
\begin{figure}
\begin{floatrow}
\ffigbox[0.47\textwidth]{
\includegraphics[width=0.9\linewidth]{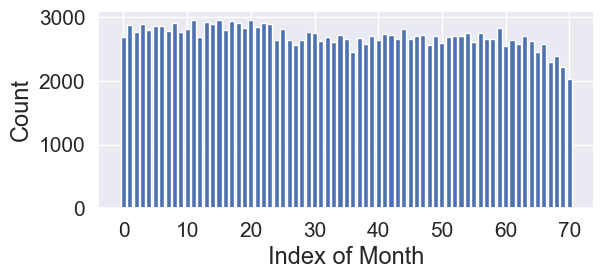}
}{
\caption{Counts of headlines published in each month. On average, 2703 headlines are published per month.}
\label{fig:count_samples}
}
\ffigbox[0.47\textwidth]{
\includegraphics[width=0.9\linewidth]{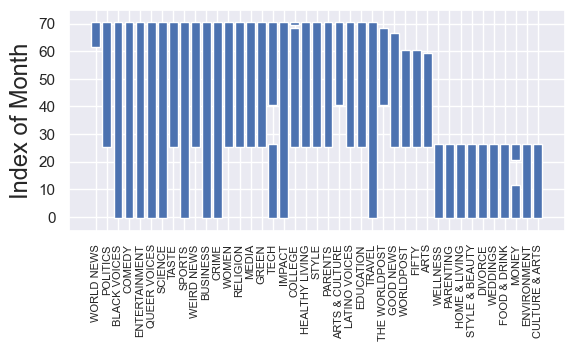}
}{
\caption{Presences of news categories in different months. Marked positions indicate categories published in the corresponding month.}
\label{fig:category_lifespan}
}
\end{floatrow}
\end{figure}

\subsection{Evaluation Protocol}
Following the notations from Section \ref{sec:temporal_modeling}, we denote every sample in the form of $(x,y,t)$, where $x\in \mathcal{X}$ is the headline, $y\in \mathcal{Y}$ is its corresponding category and $t\in T$ is its publication date.

We use $D_i$ to denote the set of all samples published in the $i$-th month for $0\leq i < n_\text{month}=71$.

Let $f: \mathcal{D}_T \times \mathcal{X} \to \mathcal{Y}$ be a learner (i.e. a function that maps a headline to a category conditioning on a training set with timestamps).
Recalling the notion of temporal robustness from Definition \ref{def:temporal_robustness}, 
for a given target sample $x_0$ with the ground truth label $y_0$, we use 
\begin{itemize}
\item 
$\mathbb{1}_{\tau_\text{earliness}\leq \tau}\left[f, D, (x_0, y_0)\right]$ to denote whether $f$ is robust against data poisoning with a maximum earliness of $\tau$ when $D_\text{clean} = D$;
\item 
$\mathbb{1}_{\tau_\text{duration}\leq \tau}\left[f, D, (x_0, y_0)\right]$ to denote whether $f$ is robust against data poisoning with a maximum duration of $\tau$ when $D_\text{clean} = D$.
\end{itemize}

With these notions, we define robust fractions corresponding to bounded earliness and duration of poisoning attacks, which serve as robustness metrics of the temporal robustness benchmark. 

\begin{definition}[Robust Fraction]
For a given $n_\text{start}$, which denotes the first month for testing,
\begin{itemize}
\item the robust fraction of a learner $f$ corresponding to a maximum earliness of $\tau$ is defined as
\begin{align*}
\frac{
\sum_{i=n_\text{start}}^{n_\text{month} - 1} \sum_{(x,y,t)\in D_i} \mathbb{1}_{\tau_\text{earliness}\leq \tau}\left[f, \bigcup_{j=0}^{i-1} D_j, (x, y)\right]
}
{\left|\bigcup_{j=n_\text{start}}^{n_\text{month}-1} D_j\right|};
\end{align*}
\item the robust fraction of a learner $f$ corresponding to a maximum duration of $\tau$ is defined as
\begin{align*}
\frac{
\sum_{i=n_\text{start}}^{n_\text{month} - 1} \sum_{(x,y,t)\in D_i} \mathbb{1}_{\tau_\text{duration}\leq \tau}\left[f, \bigcup_{j=0}^{i-1} D_j, (x, y)\right]
}
{\left|\bigcup_{j=n_\text{start}}^{n_\text{month}-1} D_j\right|}.
\end{align*}
\end{itemize}
\label{def:robust_fraction}
\end{definition}

Intuitively, 
here we are simulating a continuous data collection with periodic deployments of updated models, where models can be updated monthly with access to data newly arrived this month and all previous data.
In another word, each month of samples will be used as test data individually, and when predicting the categories of samples from the $i$-th month, the model to be used will be the one trained from only samples of previous months (i.e. from the $0$-th to the $(i-1)$-th month). The first $n_\text{start}$ months are excluded from test data to avoid performance variation due to the scarcity of training samples. Here, with $n_\text{month}=71$ months of data available, we set $n_\text{start} = 36$, which is about half of the months. The resulted test set contains a total of $91329$ samples, i.e. $\left|\bigcup_{j=n_\text{start}}^{n_\text{month}-1} D_j\right| = 91329$. 

\section{A Baseline Defense with Provable Temporal Robustness}

\begin{figure*}[!tb]
\begin{center}
\includegraphics[width=0.91\linewidth]{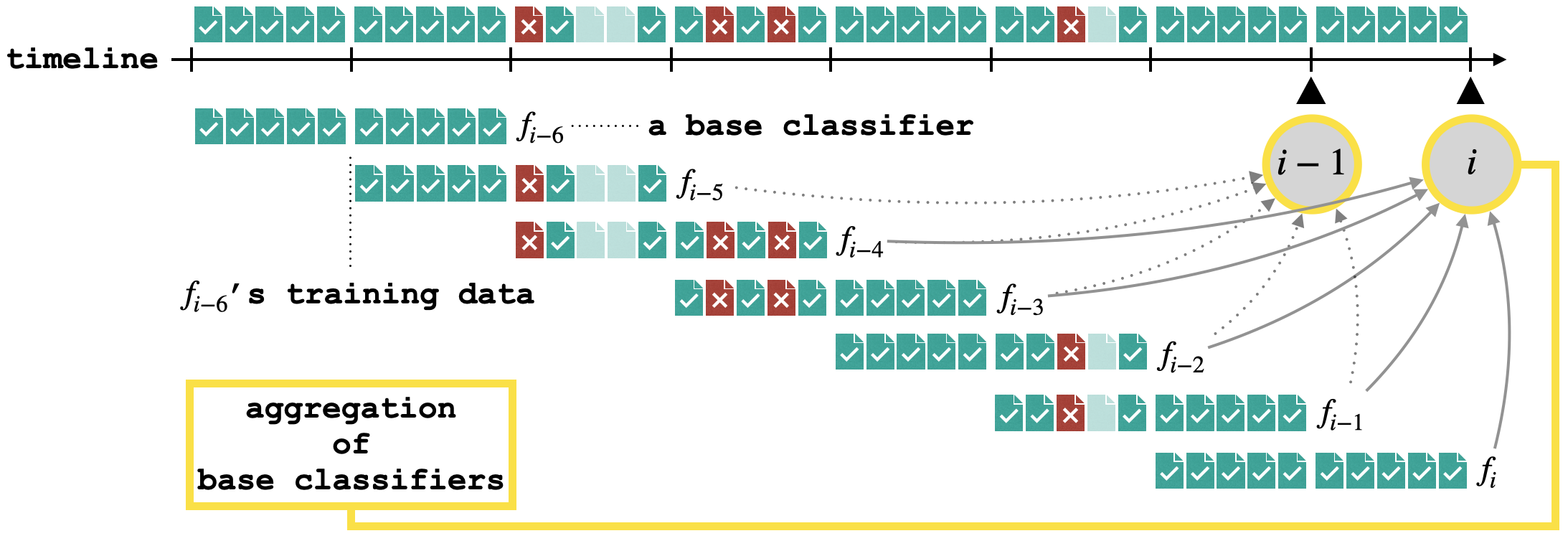}
\caption{An illustration of temporal aggregation with base coverage $n=2$ and aggregation size $k=5$ (i.e. $2$ periods of training data per base classifier and $5$ base classifiers per aggregation).}
\label{fig:temporal_aggregation}
\end{center}
\end{figure*}

\label{sec:defense}

\subsection{Temporal Aggregation}
\label{sec:TemporalAggregation_design}
Here we present temporal aggregation, a very simple baseline defense with provable temporal robustness, deliberately designed for periodic model updates with continuous data collection, to corroborate the usability of our temporal threat model.
Considering a process of periodic model updates with continuous data collection, where $i$ is the index of the current period, we use $D_0, D_1, \dots, D_i$ to denote respectively the sets of samples collected in each of all finished data collection periods.
For simplicity, we use $D_{s,t} = \bigcup_{j=s}^t D_j$ to denote the union of $D_s$ through $D_t$. 

\begin{definition}[Temporal Aggregation]
Given a deterministic learner $f$ that maps a training set $D\in \mathcal{D}_T$ to a classifier $f_D: \mathcal{X} \to \mathcal{Y}$ (i.e. a function mapping the input space $\mathcal{X}$ to the label space $\mathcal{Y}$),
\textbf{temporal aggregation} with base coverage $n$ and aggregation size $k$ computes the prediction for a test sample $x\in \mathcal{X}$ as follows:
\begin{align*}
    \arg \max_{y \in \mathcal{Y}} 
    \sum_{j = i - k + 1}^{i} \mathbb{1} \left[
        f_j(x) = y
    \right],
\end{align*}
where $f_j = f_{D_{j-n+1, j}}$ is the base classifier obtained by applying the deterministic learner $f$ to the union of $D_{j-n+1}, D_{j-n+2}, \dots, D_{j}$, $\mathbb{1}[*]$ denotes the indicator function and ties in $\arg\max$ are broken by returning the label with smaller index.
\end{definition}

An illustration of temporal aggregation with base coverage $n=2$ and aggregation size $k=5$ is included in Figure \ref{fig:temporal_aggregation}.
In short, for every updating period of temporal aggregation, one trains only one new base model using data collected in the last $n$ periods and uses the majority votes of the latest $k$ base models for predictions. Such efficient updates are favorable in practice.

\subsection{Provable Temporal Robustness against  Poisoning}
Intuitively, temporal aggregation offers temporal robustness by limiting the influence of training data within certain intervals of time to the aggregations at inference time. Now we present theoretical results showing its provable temporal robustness.
For simplicity, we use `period' as the unit of time in this section and use $\text{TA}([D_0, \dots, D_i], x)$ to denote the prediction of temporal aggregation on the input $x\in \mathcal{X}$, assuming $D_0, \dots, D_i$ are sets of samples collected in previous periods.

\begin{theorem}[Provable Temporal Robustness against Data Poisoning with Bounded Earliness]
Let $D_0, D_1, \dots, D_i$ be sets of samples collected in different periods, where $i$ is the index of the current period.
Let $n$ and $k$ be the base coverage and aggregation size of temporal aggregation.
For any $x \in \mathcal{X}$,
let $y = \text{TA}([D_0, \dots, D_i], x)$.
For any $\tau \in \mathbb{N}$,
if for all $y'\in \mathcal{Y} \setminus \{y\}$,
\begin{align}
    \sum_{j = i-k+1}^{i-\tau} \mathbb{1} \left[
        f_j(x) = y
    \right] \geq &
     \sum_{j = i-k +1}^{i-\tau} \mathbb{1} \left[
        f_j(x) = y'
    \right] 
    +\min(\tau, k) + \mathbb{1}[y > y'],
\label{eq:robust_earliness}
\end{align}
then we have
$
\text{TA}([D'_0, \dots, D'_i], x) = y
$
for any $D'_0, \dots, D'_i$ reachable from $D_0, \dots, D_i$ after some data poisoning attack with a maximum earliness of $\tau$.
\label{thm:robust_earliness}
\end{theorem}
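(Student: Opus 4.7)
The plan is to use the earliness bound to identify which base classifiers are guaranteed to be unaffected by the attack, and then to show that their votes alone suffice to dominate every competing label. By Assumption \ref{assumption:birth_date}, $\tau_\text{earliness} \leq \tau$ forces every poisoned sample to have a timestamp no earlier than $i - \tau + 1$ (taking a period as the unit of time and the current period $i$ as $t_\text{end}$). Since a base classifier $f_j$ depends only on $D_{j-n+1, j}$, it is identical to its clean-data counterpart unless some period in $[j-n+1, j]$ lies in $[i-\tau+1, i]$, which in turn requires $j \geq i - \tau + 1$. Hence every $f_j$ with $j \leq i - \tau$ is unchanged under any attack with earliness at most $\tau$.

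Next, I would split the aggregated family $\{f_{i-k+1}, \ldots, f_i\}$ into an unaffected block indexed by $j \in [i-k+1,\, i-\tau]$ (of size $\max(k-\tau, 0)$) and a potentially affected block of size $\min(\tau, k)$. Writing $v(\cdot)$ for vote counts on the poisoned data, the unaffected block contributes exactly $\sum_{j=i-k+1}^{i-\tau} \mathbb{1}[f_j(x) = y]$ to $v(y)$ while the affected block contributes a nonnegative amount, giving a lower bound on $v(y)$. Conversely, for any $y' \neq y$, the affected block contributes at most $\min(\tau, k)$ votes to $y'$, so $v(y') \leq \sum_{j=i-k+1}^{i-\tau} \mathbb{1}[f_j(x) = y'] + \min(\tau, k)$.

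The last step is to invoke the $\arg\max$ tie-breaking rule stated in the definition of temporal aggregation: $y$ beats $y'$ iff $v(y) > v(y')$ when $y > y'$ and iff $v(y) \geq v(y')$ when $y < y'$, which compress into $v(y) \geq v(y') + \mathbb{1}[y > y']$. Substituting the two bounds above, this reduces to hypothesis \eqref{eq:robust_earliness}; since this holds for every competitor $y' \neq y$, the poisoned aggregation still returns $y$.

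The main obstacle is careful index bookkeeping rather than any deep argument. In particular, the corner case $\tau \geq k$ (where every base classifier in the aggregation is in the affected block, the unaffected-block sum is empty, and \eqref{eq:robust_earliness} becomes unsatisfiable except in degenerate cases) must be handled cleanly, and the indicator $\mathbb{1}[y > y']$ must be aligned precisely with the "smaller index wins" tie-breaking convention to get the off-by-one right.
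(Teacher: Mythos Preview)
Your proposal is correct and follows essentially the same approach as the paper: identify that only the last $\min(\tau,k)$ base classifiers in the aggregation can be affected by an attack of earliness at most $\tau$, use this to lower-bound the votes for $y$ and upper-bound the votes for any $y'$, and then invoke the tie-breaking rule via the $\mathbb{1}[y>y']$ term. Your treatment is in fact slightly more careful than the paper's, since you explicitly trace through why the dependence of $f_j$ on $D_{j-n+1,j}$ still yields exactly the cutoff $j\le i-\tau$ for unaffected classifiers (the paper simply asserts ``only the last $\tau$ base classifiers may behave differently'') and you flag the $\tau\ge k$ corner case.
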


\begin{proof}
The idea of this proof is straightforward. Given a maximum earliness of $\tau$, only data from the last $\tau$ periods may be poisoned and thus only the last $\tau$ base classifiers may behave differently after such an attack. With this observation, it is clear that the left-hand side of Inequality \ref{eq:robust_earliness}, $\sum_{j = i-k+1}^{i-\tau} \mathbb{1} \left[
        f_j(x) = y
    \right]$
is a lower bound for the number of base classifiers in the aggregation that predicts label $y$.
Similarly, $\sum_{j = i-k +1}^{i-\tau} \mathbb{1} \left[
        f_j(x) = y'
    \right] + \min(\tau, k)$ 
is an upper bound for the number of base classifiers in the aggregation that predicts label $y'$, since no more than $\min(\tau, k)$ base classifiers in the aggregation can be affected.
The remaining term in Inequality \ref{eq:robust_earliness}, $\mathbb{1}[y > y']$ is due to the tie-breaking rule of temporal aggregation.
Thus when Inequality \ref{eq:robust_earliness} holds for $y'$, we know $\text{TA}([D'_0, \dots, D'_i], x) \neq y'$. Since Inequality \ref{eq:robust_earliness} holds for all $y'\neq y$, we have $\text{TA}([D'_0, \dots, D'_i], x) = y$ and the proof is completed.
\end{proof}

\begin{theorem}[Provable Temporal Robustness against Data Poisoning with Bounded Duration]
Let $D_0, D_1, \dots, D_i$ be sets of samples collected in different periods, where $i$ is the index of the current period.
Let $n$ and $k$ be the base coverage and aggregation size of temporal aggregation.
For any $x \in \mathcal{X}$,
let $y = \text{TA}([D_0, \dots, D_i], x)$.
For any $\tau \in \mathbb{N}$,
if for all $y'\in \mathcal{Y} \setminus \{y\}$ and all $s\in \mathbb{N}$ with $i-k+1\leq s \leq i$,
\begin{align}
&
\sum_{j=i-k+1}^{s-1} \mathbb{1} \left[
        f_j(x) = y
    \right]
    + \sum_{j=s + \tau + n - 1}^{i} \mathbb{1} \left[
        f_j(x) = y
    \right] \nonumber
\\
\geq &
\sum_{j=i-k+1}^{s-1} \mathbb{1} \left[
        f_j(x) = y'
    \right] + \sum_{j=s + \tau + n - 1}^{i} \mathbb{1} \left[
        f_j(x) = y'
    \right] + \min(\tau + n -1, k) + \mathbb{1}[y > y']
    \label{eq:robust_duration}
\end{align}
then we have
$
\text{TA}([D'_0, \dots, D'_i], x) = y
$
for any $D'_0, \dots, D'_i$ reachable from $D_0, \dots, D_i$ after some data poisoning attack with a maximum duration of $\tau$.
\label{thm:robust_duration}
\end{theorem}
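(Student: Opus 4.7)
The plan is to mirror the structure of the proof of Theorem \ref{thm:robust_earliness}, introducing one additional quantifier to account for the position of the attack's time window. I first fix an arbitrary poisoning attack with maximum duration $\tau$; by definition, all poisoned samples have birth dates in some interval $[a, a+\tau-1]$ of at most $\tau$ consecutive periods. A base classifier $f_j$ in the current aggregation can only behave differently after the attack if its training window $[j-n+1, j]$ intersects $[a, a+\tau-1]$, which restricts the potentially affected indices to $j \in [a, a+\tau+n-2]$, a contiguous block of length $\tau+n-1$.

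Next, I would set $s = \max(a, i-k+1)$. If $a > i$, the attack window lies entirely after the current aggregation, no base classifier is affected, and the robustness claim is trivial; otherwise $s \in [i-k+1, i]$ so the hypothesis of the theorem applies at this choice of $s$. The set of potentially affected classifier indices inside the aggregation window $[i-k+1, i]$ is then contained in $[s, s+\tau+n-2]$, so every $j \in [i-k+1, s-1] \cup [s+\tau+n-1, i]$ indexes an unaffected base classifier whose prediction on $x$ is unchanged by the attack.

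The two sums on the left-hand side of Inequality \ref{eq:robust_duration} therefore lower-bound the number of votes for $y$ in the post-attack aggregation, while the analogous sums on the right-hand side, plus the term $\min(\tau+n-1, k)$, upper-bound the number of votes for any other label $y'$ (since at most $\min(\tau+n-1, k)$ aggregation slots fall inside the affected range and could conceivably all flip to $y'$). The $\mathbb{1}[y > y']$ term reflects the tie-breaking rule, converting the necessary strict gap into an equality when $y$ happens to win ties. Combining these, Inequality \ref{eq:robust_duration} certifies that the aggregation continues to return $y$ rather than any $y' \neq y$, which completes the argument.

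The main obstacle, relative to the earliness case, is that the attacker can place the affected window anywhere along the timeline, not merely at the tail of the aggregation. Parametrizing the bound by $s$, and universally quantifying over $s \in [i-k+1, i]$, is precisely what handles this freedom; the small bookkeeping step is verifying that $s = \max(a, i-k+1)$ always lies in this range whenever the attack can affect the aggregation at all, and that the true affected range embeds into $[s, s+\tau+n-2]$ in both cases $a \geq i-k+1$ and $a < i-k+1$ (in the latter, the effective range is strictly shorter than $\tau+n-1$, so the bound $\min(\tau+n-1, k)$ is only loosened).
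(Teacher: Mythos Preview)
Your proposal is correct and follows essentially the same argument as the paper: identify that an attack of duration $\tau$ can affect at most a contiguous block of $\tau+n-1$ base classifiers, let $s$ index the first possibly affected classifier in the aggregation, and then compare vote counts outside that block to certify the majority for $y$. Your treatment is in fact a bit more careful than the paper's, since you explicitly set $s=\max(a,\,i-k+1)$ and verify the edge cases $a>i$ and $a<i-k+1$, whereas the paper simply assumes $f_s$ is the first affected classifier without spelling these out.
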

\begin{proof}
With an upper bound of $\tau$ for the duration, an attack cannot affect simultaneously any two periods that are at least $\tau$ periods away from each other, and therefore no two base classifiers that are at least $\tau + n - 1$ away from each other can be both poisoned. 
Thus, if assuming $f_s$ is the first base classifier affected by the attack, $f_j$ with $i-k+1\leq j\leq s-1$ or $j\geq s+\tau+n-1$ must be free from any impact of the attack, and therefore the left-hand side of Inequality \ref{eq:robust_duration} is a natural lower bound for the number of base classifiers in the aggregation that predicts label $y$.
In addition, since no two base classifiers that are at least $\tau + n - 1$ away from each other can be both poisoned, 
$\sum_{j=i-k+1}^{s-1} \mathbb{1} \left[
        f_j(x) = y'
    \right] + \sum_{j=s + \tau + n - 1}^{i} \mathbb{1} \left[
        f_j(x) = y'
    \right] + \min(\tau + n -1, k)$ on the right-hand side of Inequality \ref{eq:robust_duration} becomes an upper bound for the number of base classifiers in the aggregation that predicts label $y'$.
Again, $\mathbb{1}[y > y']$ is due to the tie-breaking rule of temporal aggregation.
Consequently, we know that Inequality \ref{eq:robust_duration} is true for a certain $y'$ and a certain $s$ suggests $\text{TA}([D'_0, \dots, D'_i], x) \neq y'$, assuming the first base classifier affected is $f_s$. Since it it true for all $y'\neq y$ and all $s$, we have $\text{TA}([D'_0, \dots, D'_i], x) = y$ and the proof is completed.
\end{proof}

\textbf{Implications and applications of Theorem \ref{thm:robust_earliness} and \ref{thm:robust_duration}:}
The key conditions used in these two theorems, Inequality \ref{eq:robust_earliness} and \ref{eq:robust_duration}, are both computationally easy given predictions from base classifiers: For any sample at inference time, by simply examining the predictions from base classifiers, one can obtain lower bounds for both earliness and duration of data poisoning attacks that stand a chance of changing the prediction from temporal aggregation.

In addition, it is worth noting that Theorem \ref{thm:robust_earliness} and \ref{thm:robust_duration} do not assume whether $D_0, \dots, D_i$ are all clean or potentially poisoned. Given that in the temporal threat model of data poisoning, $D'_0, \dots, D'_i$ is reachable from $D_0, \dots, D_i$ after some poisoning attack with a maximum earliness (or duration) of $\tau$ \textbf{if and only if} $D_0, \dots, D_i$ is also reachable from $D'_0, \dots, D'_i$ after some poisoning attack with a maximum earliness (or duration) of $\tau$, these theorems can be applied regardless of whether the training sets are considered clean or potentially poisoned: With clean training data, we obtain lower bounds of earliness and duration required for a hypothetical poisoning attack to be effective; With potentially poisoned data, we get lower bounds of earliness and duration required for a past poisoning attack to be effective. In this work, we apply them for evaluation of temporal aggregation on the temporal robustness benchmark from Section \ref{sec:benchmark}, where data are assumed poison-free.
\section{Evaluation of the Baseline Defense}
\label{sec:eval}

\subsection{Evaluation Setup}
Given that the temporal robustness benchmark simulates a setting with monthly updated models, we consider each month a separate period for temporal aggregation (i.e. one base classifier per month) and use `month' as the unit of time.
For the base learner (i.e. the learning of each base classifier), we use the pre-trained RoBERTa \cite{liu2019roberta} with a BERT-base architecture as a feature extractor, mapping each headline into a vector with 768 dimensions. We optimize a linear classification head over normalized RoBERTa features with AdamW optimizer \cite{AdamW} for 50 epochs, using a learning rate of 1e-3 and a batch size of 256.
To make the base learner deterministic, we set the random seeds explicitly and disable all nondeterministic features from PyTorch \cite{torch}.

\subsection{Establishing Baseline Accuracy} 

\begin{wraptable}{r}{0.48\textwidth}
\caption{Clean accuracy of the base learner trained on the latest $n$ months of samples.}
\label{tab:clean_acc_base_learner}
\centering
\resizebox{\linewidth}{!}{
\begin{tabular}{c|ccccc}
\hline
Category & $n=1$ & $n=3$ & $n=6$ & $n=9$ & $n=12$\\
\hline \hline
\textbf{all} & 54.09\% & 53.77\%  & 53.47\% &  53.43\% & 53.15\%\\
\hline \hline
WORLD NEWS &	59.00\%	& 46.59\% &	36.79\% & 	31.39\%	& 25.56\% \\
POLITICS&	62.13\% &	57.52\%	& 55.36\% &	54.31\%	& 53.96\% \\
\hline
TRAVEL &  52.30\% &	56.65\% &	59.46\% &	61.63\% &	62.26\% \\
COLLEGE &	42.30\%	& 49.30\% &	51.68\% &	54.20\% &	57.28\%\\
\hline 
\end{tabular}
}
\end{wraptable}

To establish baselines for assessing the performance and temporal robustness of temporal aggregation, we present in Table \ref{tab:clean_acc_base_learner} clean accuracy, i.e. robust fraction from Definition \ref{def:robust_fraction} with $\tau=0$, 
of the base learner trained on varying months of latest data. 
Corroborating the effects of natural distribution shifts, among settings evaluated, the average clean accuracy over all categories reaches its peak when using the latest one month of data for training, thus we use the corresponding value, $54.09\%$ as the baseline clean accuracy in our evaluation. In addition, here we observe that different categories can have different degrees of temporal distribution shifts and different data availability:
For some categories including WORLD NEWS/POLITICS, training on earlier data reduces accuracy, indicating effects of temporal distribution shifts; For some categories including TRAVEL/COLLEGE, training on earlier data improves accuracy, indicating the effects from using more data.

\subsection{Performance and Robustness}

\begin{figure}[tb!]
\begin{center}
\subfigure[robust fraction with respect to bounded earliness]{
    \includegraphics[width=0.3\linewidth]{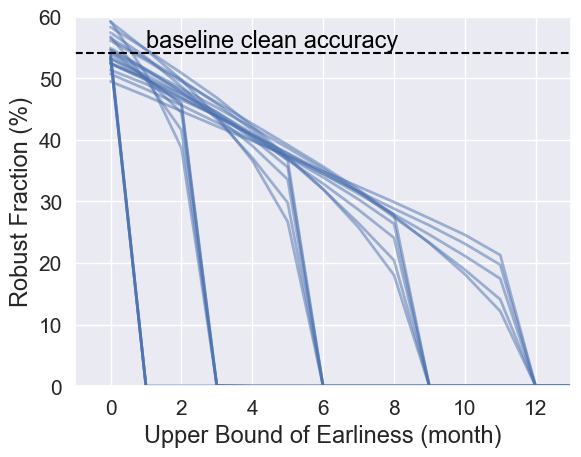}
    }
\hfil
\subfigure[robust fraction with respect to bounded duration]{
    \includegraphics[width=0.3\linewidth]{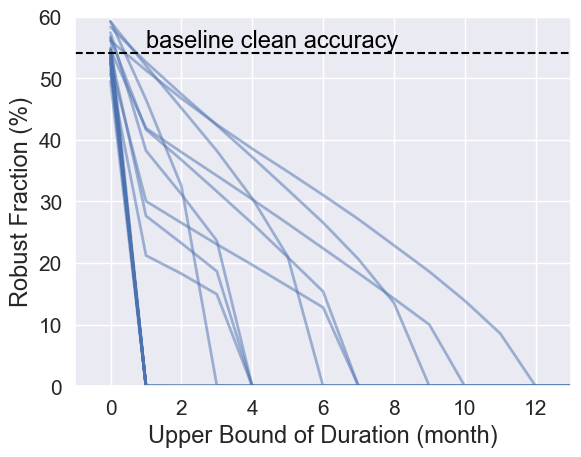}
    }
\caption{An overview of the performance and temporal robustness of temporal aggregation with with base coverage $n\in\{1, 3, 6, 9, 12\}$ and aggregation size $k\in \{6, 12, 18, 24\}$, a total of 20 combinations of hyperparameters. Detailed views of individual runs can be found in Figure \ref{fig:ablation_n} and Figure \ref{fig:ablation_k}.}
\label{fig:eval_all}
\end{center}
\vskip -0.1in
\end{figure}

\begin{figure}[tb!]
\begin{center}
\subfigure[$n\in\{1, 3, 6, 9, 12\}$, $k=6$]{
    \includegraphics[width=0.23\linewidth]{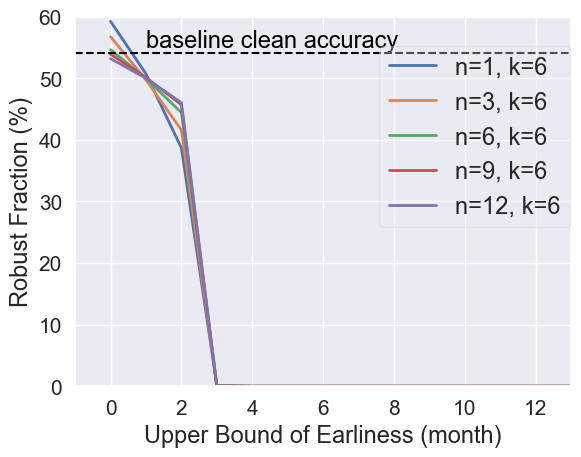}
    \hfil
    \includegraphics[width=0.23\linewidth]{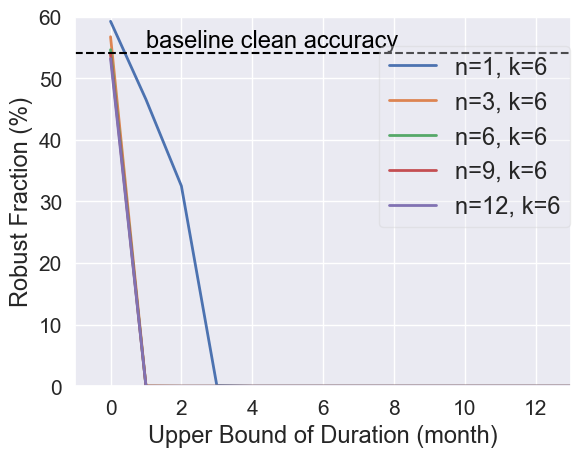}
    }
\hfil
\subfigure[$n\in\{1, 3, 6, 9, 12\}$, $k=12$]{
    \includegraphics[width=0.23\linewidth]{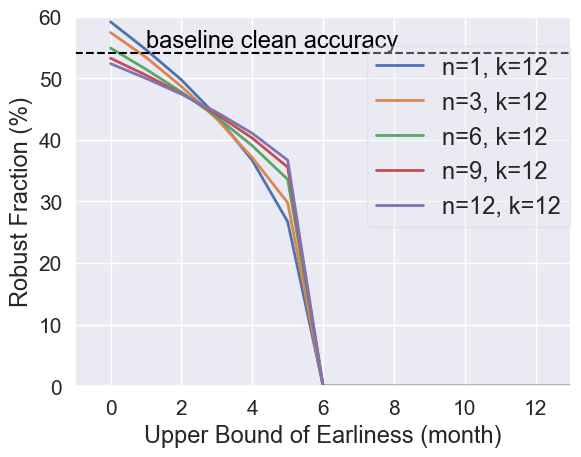}
    \hfil
    \includegraphics[width=0.23\linewidth]{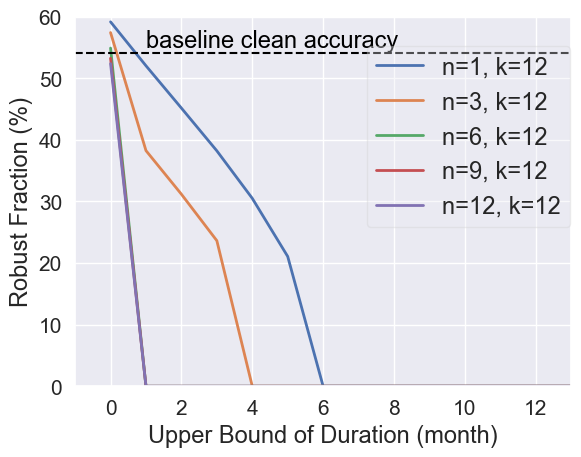}
    }
    
\subfigure[$n\in\{1, 3, 6, 9, 12\}$, $k=18$]{
    \includegraphics[width=0.23\linewidth]{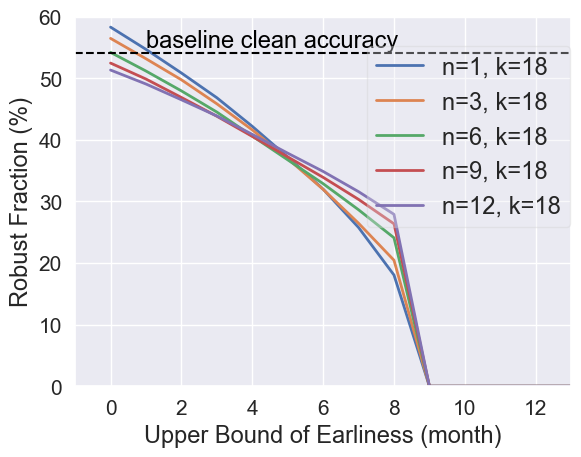}
    \hfil
    \includegraphics[width=0.23\linewidth]{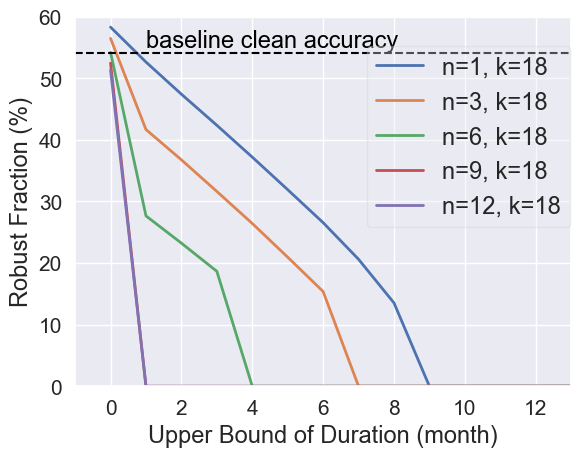}
    }
\hfil
\subfigure[$n\in\{1, 3, 6, 9, 12\}$, $k=24$]{
    \includegraphics[width=0.23\linewidth]{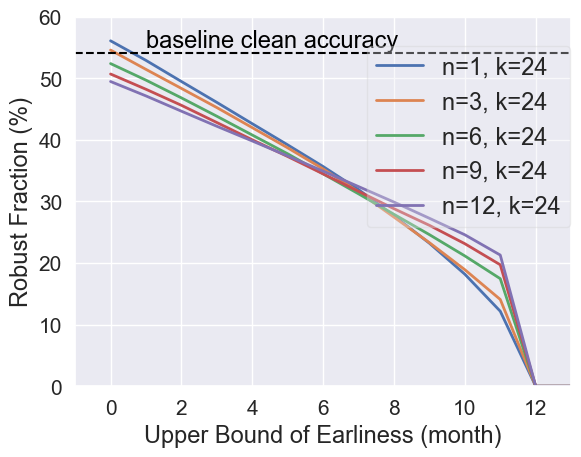}
    \hfil
    \includegraphics[width=0.23\linewidth]{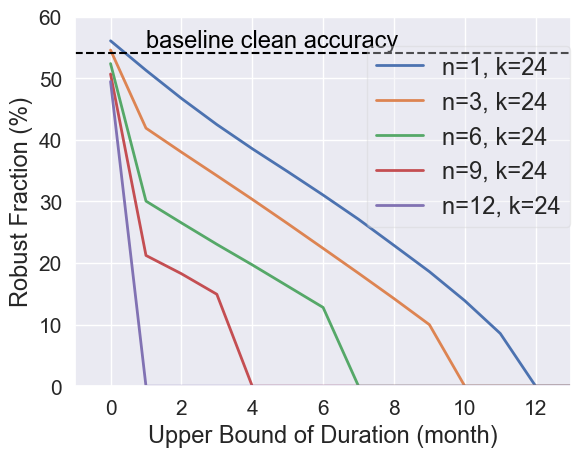}
    }
\caption{The effects of base coverage $n$ to the performance and temporal robustness.}
\label{fig:ablation_n}
\end{center}
\vskip -0.1in
\end{figure}

\begin{figure}[tb!]
\begin{center}
\subfigure[$n=1$, $k\in \{6, 12, 18, 24\}$]{
    \includegraphics[width=0.23\linewidth]{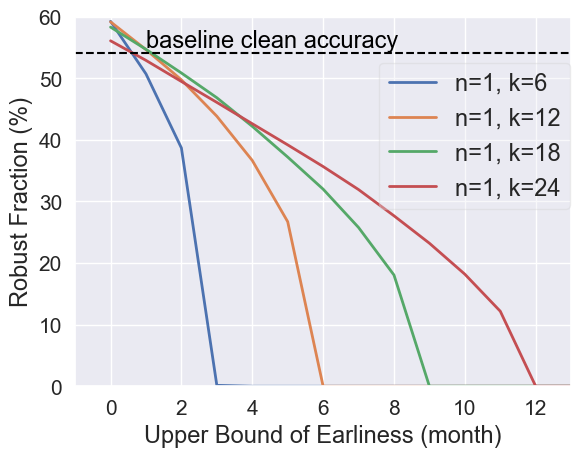}
    \hfil
    \includegraphics[width=0.23\linewidth]{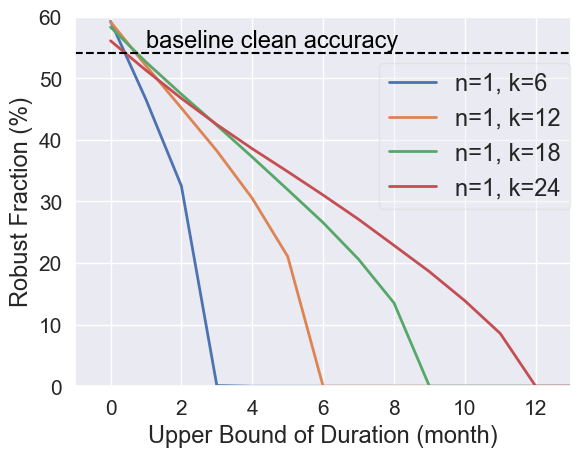}
    }
\hfil
\subfigure[$n=3$, $k\in \{6, 12, 18, 24\}$]{
    \includegraphics[width=0.23\linewidth]{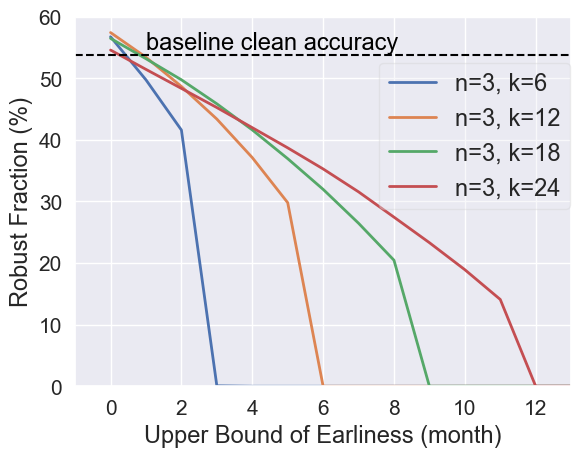}
    \hfil
    \includegraphics[width=0.23\linewidth]{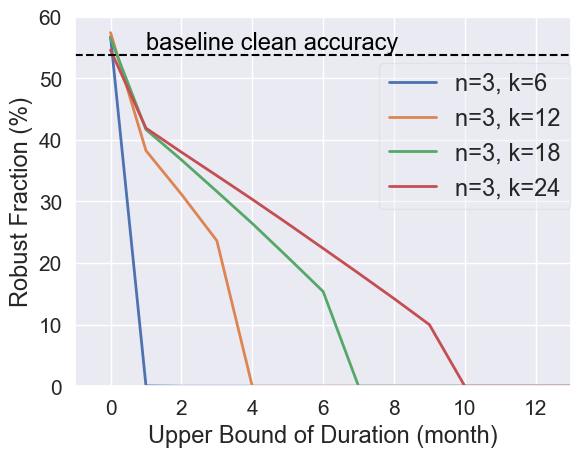}
    }

\subfigure[$n=6$, $k\in \{6, 12, 18, 24\}$]{
    \includegraphics[width=0.23\linewidth]{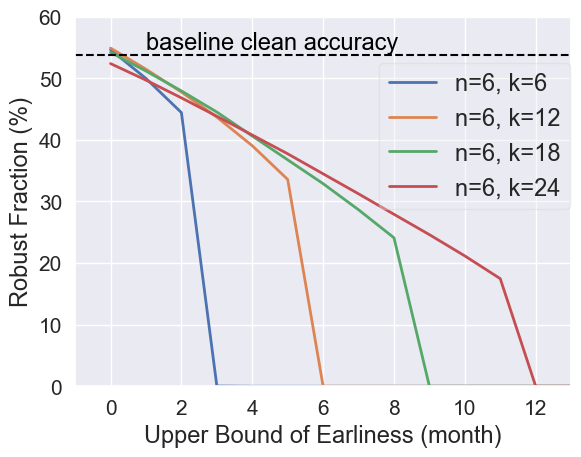}
    \hfil
    \includegraphics[width=0.23\linewidth]{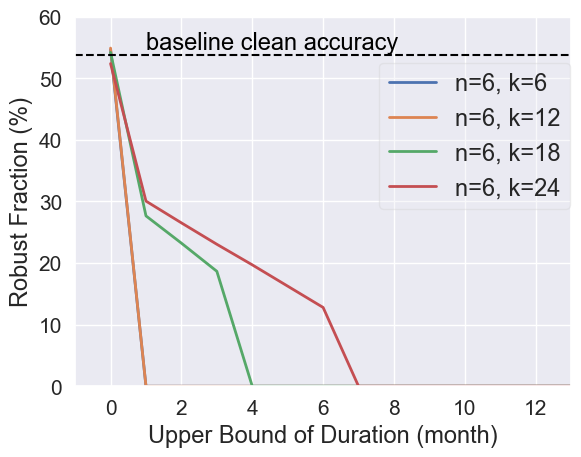}
    }
\hfil
\subfigure[$n=9$, $k\in \{6, 12, 18, 24\}$]{
    \includegraphics[width=0.23\linewidth]{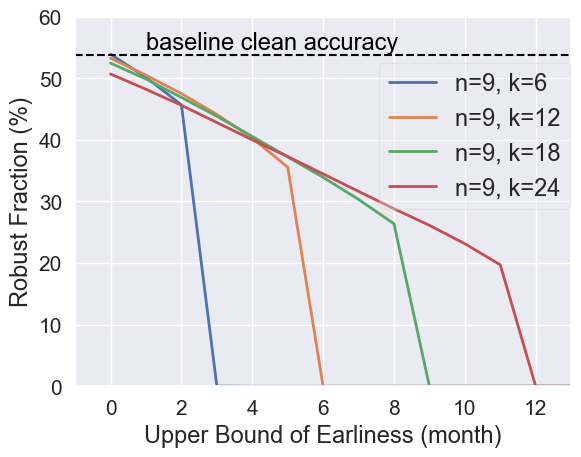}
    \hfil
    \includegraphics[width=0.23\linewidth]{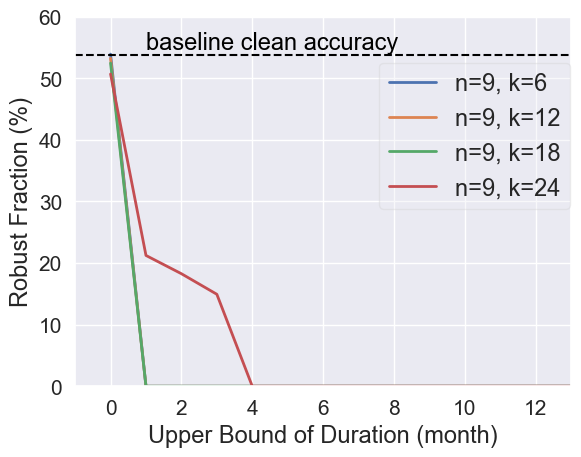}
    }

\caption{The effects of aggregation size $k$ to the performance and temporal robustness.}
\label{fig:ablation_k}
\end{center}
\vskip -0.1in
\end{figure}

In Figure \ref{fig:eval_all}, we include an overview of the performance and robustness of temporal aggregation with base coverage $n\in\{1, 3, 6, 9, 12\}$ and aggregation size $k\in \{6, 12, 18, 24\}$, a total of 20 combinations of hyperparameters. For clarity, we defer the study of hyperparameters to Section \ref{sec:ablation}.

From Figure \ref{fig:eval_all}, we observe that temporal aggregation is capable of providing decent temporal robustness with minimal loss of clean accuracy. For example, with $n=1$ and $k=24$, it is provably robust for a sizable share (with respect to the baseline clean accuracy) of test samples against poisoning attacks that started no more than $8\sim 11$ months in advance (earliness) and attacks that lasted for no more than $6\sim 9$ months (duration).

\subsection{Hyperparameters for Temporal Aggregation}
\label{sec:ablation}

\textbf{The effects of base coverage $n$:}
In Figure \ref{fig:ablation_n}, we compare the robust fraction of temporal aggregation with different base coverage $n$. Intuitively, the choice of $n$ involves a trade-off between training set size and distribution shifts: A larger $n$ increases training set size, which is positive to clean accuracy, but also enlarges distribution shifts, as more samples from early-time will be included, which is negative to clean accuracy.  In this case, however, we observe a mild but negative impact from increasing $n$ to performance, consistent with previous observations from Table \ref{tab:clean_acc_base_learner}, again indicating the presence of strong natural distribution shifts over time.
Regarding temporal robustness, Figure \ref{fig:ablation_n} suggests that a larger $n$ significantly reduces temporal robustness of temporal aggregation against poisoning attacks with bounded duration, which is expected as an attacker can affect more base classifiers within the same duration.

\textbf{The effects of aggregation size $k$:}
In Figure \ref{fig:ablation_k}, we present a comparison of temporal aggregation with different aggregation size $k$, where we observe that a larger $k$ has a rather mild negative impact on performance but greatly facilitates temporal robustness with both bounded earliness and duration. This is because, with a larger $k$, more base classifiers will be utilized in the aggregation. On the one hand, these extra base classifiers are trained on earlier data with larger natural distribution shifts from test distribution, hence the reduced clean accuracy; On the other hand, extra base classifiers dilute the impact from poisoned base classifiers to the aggregation, hence the improved robustness.

\section{Related Work}
The proposed defense, temporal aggregation, is inspired by aggregation-based certified defenses for the traditional modeling of data poisoning, with a shared spirit of using voting for limiting the impacts of poisoned samples. \cite{DPA, FA, LethalDose, ROE, PracticalAggregationDef} assume deterministic base learners and guarantee the stability of predictions with bounded numbers of poisoned samples. \cite{Bagging, RandomSelection} provides probabilistic guarantees of the same form with small, bounded error rates using potentially stochastic learners.
\section{Discussion}
\textbf{Temporal Aggregation introduces little-to-none extra training cost in many practical cases.} In reality with continuous data collection, a common practice is to re-train/update models using new data every once in a while because model performance degrades as time evolves. As mentioned in Section \ref{sec:TemporalAggregation_design}, to use Temporal Aggregation, one still only need to train one classifier per updating period since previous base classifiers can be reused, which incurs minimal/no extra training overhead.

\textbf{The increase of inference overhead can be small depending on the design of base classifiers.} Naturally, using more base classifiers (i.e. using a larger $k$) increases the inference overhead. However, the extra inference overhead can be small depending on the designs of base classifiers. An example design is to fine-tune only the last few layers of the model instead of training entirely from scratch. If one uses a shared, pre-trained feature extractor and only trains a task head for each base classifier (as in our Section \ref{sec:eval}), the shared feature extractor only needs to be forwarded once for inference and the extra overhead can be minimized.

\textbf{Using both time and the number of poisoned samples to characterize attack budgets.} While this work focuses on the benefits of incorporating temporal concepts, it is worth noting that earliness and duration can be combined with the number of poisoned samples in bounding attack budgets. As an example, our baseline defense Temporal Aggregation is not only a provable defense against attacks with bounded earliness and duration but also a provable defense with respect to a bounded number of poisoned samples (when using a small base coverage $n$). This follows directly from previous aggregation defenses \cite{DPA, FA} targeting attacks with the number of poisoned samples bounded.

\textbf{Potential opportunities for other defenses.}
The temporal threat model of data poisoning may offer new angles to improve the usability of existing poisoning defenses for different stages. For detection-based defenses \cite{SteinhardtKL17, DiakonikolasKK019, AnomalyDetect, SpectralSignature, NeuralCleanse, ASSET_detect, SiftOut} targeting the early stages where the models have not been trained, the temporal threat model may delay the success of poisoning attacks and allow detection-based defenses to operate on more data. 
Another promising direction is to locate poisoned samples after the attack succeeds \cite{shan2022poison}, which targets later stages where the models have been deployed. If one can reverse poisoning attacks shortly after they succeed, incorporating the temporal threat model can help to prolong every attack process and may eventually ensure a low exploitability of poisoning attacks.

\section{Conclusion}
Incorporating timestamps of samples that are often available but neglected in the past, we provide a novel, temporal modeling of data poisoning.
With our temporal modeling, we derive notions of temporal robustness that offer meaningful protections even with unbounded amounts of poisoned samples, which is never possible with traditional modelings.
We design a temporal robustness benchmark to support empirical evaluations of the proposed robustness notions.
We propose temporal aggregation, an effective baseline defense with provable temporal robustness.
Evaluation verifies its temporal robustness against data poisoning and highlights the potential of the temporal threat model.

\section*{Acknowledgements}
This project was supported in part by Meta grant 23010098, NSF CAREER AWARD 1942230, HR001119S0026 (GARD), ONR YIP award N00014-22-1-2271, Army Grant No. W911NF2120076, NIST 60NANB20D134, the NSF award CCF2212458, a capital one grant and an Amazon Research Award.


\bibliographystyle{plainnat}
\bibliography{main}

\begin{thebibliography}{46}
\providecommand{\natexlab}[1]{#1}
\providecommand{\url}[1]{\texttt{#1}}
\expandafter\ifx\csname urlstyle\endcsname\relax
  \providecommand{\doi}[1]{doi: #1}\else
  \providecommand{\doi}{doi: \begingroup \urlstyle{rm}\Url}\fi

\bibitem[Aghakhani et~al.(2021)Aghakhani, Meng, Wang, Kruegel, and
  Vigna]{Bullseye}
Hojjat Aghakhani, Dongyu Meng, Yu{-}Xiang Wang, Christopher Kruegel, and
  Giovanni Vigna.
\newblock Bullseye polytope: {A} scalable clean-label poisoning attack with
  improved transferability.
\newblock In \emph{{IEEE} European Symposium on Security and Privacy,
  EuroS{\&}P 2021, Vienna, Austria, September 6-10, 2021}, pages 159--178.
  {IEEE}, 2021.
\newblock \doi{10.1109/EuroSP51992.2021.00021}.
\newblock URL \url{https://doi.org/10.1109/EuroSP51992.2021.00021}.

\bibitem[Biggio et~al.(2012)Biggio, Nelson, and Laskov]{PoisonSVM}
Battista Biggio, Blaine Nelson, and Pavel Laskov.
\newblock Poisoning attacks against support vector machines.
\newblock In \emph{Proceedings of the 29th International Conference on Machine
  Learning, {ICML} 2012, Edinburgh, Scotland, UK, June 26 - July 1, 2012}.
  icml.cc / Omnipress, 2012.
\newblock URL \url{http://icml.cc/2012/papers/880.pdf}.

\bibitem[Carlini(2021)]{PoisonSemiSupervised}
Nicholas Carlini.
\newblock Poisoning the unlabeled dataset of semi-supervised learning.
\newblock In Michael Bailey and Rachel Greenstadt, editors, \emph{30th {USENIX}
  Security Symposium, {USENIX} Security 2021, August 11-13, 2021}, pages
  1577--1592. {USENIX} Association, 2021.
\newblock URL
  \url{https://www.usenix.org/conference/usenixsecurity21/presentation/carlini-poisoning}.

\bibitem[Carlini and Terzis(2022)]{PoisonContrastive}
Nicholas Carlini and Andreas Terzis.
\newblock Poisoning and backdooring contrastive learning.
\newblock In \emph{The Tenth International Conference on Learning
  Representations, {ICLR} 2022, Virtual Event, April 25-29, 2022}.
  OpenReview.net, 2022.
\newblock URL \url{https://openreview.net/forum?id=iC4UHbQ01Mp}.

\bibitem[Carlini et~al.(2023)Carlini, Jagielski, Choquette{-}Choo, Paleka,
  Pearce, Anderson, Terzis, Thomas, and Tram{\`{e}}r]{PoisonWebScaleTrainSet}
Nicholas Carlini, Matthew Jagielski, Christopher~A. Choquette{-}Choo, Daniel
  Paleka, Will Pearce, Hyrum Anderson, Andreas Terzis, Kurt Thomas, and Florian
  Tram{\`{e}}r.
\newblock Poisoning web-scale training datasets is practical.
\newblock \emph{CoRR}, abs/2302.10149, 2023.
\newblock \doi{10.48550/ARXIV.2302.10149}.
\newblock URL \url{https://doi.org/10.48550/arXiv.2302.10149}.

\bibitem[Chen et~al.(2020)Chen, Li, Wu, Sheng, and Li]{RandomSelection}
Ruoxin Chen, Jie Li, Chentao Wu, Bin Sheng, and Ping Li.
\newblock A framework of randomized selection based certified defenses against
  data poisoning attacks.
\newblock \emph{CoRR}, abs/2009.08739, 2020.
\newblock URL \url{https://arxiv.org/abs/2009.08739}.

\bibitem[Chen et~al.(2022)Chen, Li, Li, Yan, and Wu]{CollectiveRobust}
Ruoxin Chen, Zenan Li, Jie Li, Junchi Yan, and Chentao Wu.
\newblock On collective robustness of bagging against data poisoning.
\newblock In Kamalika Chaudhuri, Stefanie Jegelka, Le~Song, Csaba
  Szepesv{\'{a}}ri, Gang Niu, and Sivan Sabato, editors, \emph{International
  Conference on Machine Learning, {ICML} 2022, 17-23 July 2022, Baltimore,
  Maryland, {USA}}, volume 162 of \emph{Proceedings of Machine Learning
  Research}, pages 3299--3319. {PMLR}, 2022.
\newblock URL \url{https://proceedings.mlr.press/v162/chen22k.html}.

\bibitem[Chen et~al.(2017)Chen, Liu, Li, Lu, and Song]{TargetedBackdoor}
Xinyun Chen, Chang Liu, Bo~Li, Kimberly Lu, and Dawn Song.
\newblock Targeted backdoor attacks on deep learning systems using data
  poisoning.
\newblock \emph{CoRR}, abs/1712.05526, 2017.
\newblock URL \url{http://arxiv.org/abs/1712.05526}.

\bibitem[Diakonikolas et~al.(2019)Diakonikolas, Kamath, Kane, Li, Steinhardt,
  and Stewart]{DiakonikolasKK019}
Ilias Diakonikolas, Gautam Kamath, Daniel Kane, Jerry Li, Jacob Steinhardt, and
  Alistair Stewart.
\newblock Sever: {A} robust meta-algorithm for stochastic optimization.
\newblock In Kamalika Chaudhuri and Ruslan Salakhutdinov, editors,
  \emph{Proceedings of the 36th International Conference on Machine Learning,
  {ICML} 2019, 9-15 June 2019, Long Beach, California, {USA}}, volume~97 of
  \emph{Proceedings of Machine Learning Research}, pages 1596--1606. {PMLR},
  2019.
\newblock URL \url{http://proceedings.mlr.press/v97/diakonikolas19a.html}.

\bibitem[Geiping et~al.(2021)Geiping, Fowl, Huang, Czaja, Taylor, Moeller, and
  Goldstein]{Brew}
Jonas Geiping, Liam~H. Fowl, W.~Ronny Huang, Wojciech Czaja, Gavin Taylor,
  Michael Moeller, and Tom Goldstein.
\newblock Witches' brew: Industrial scale data poisoning via gradient matching.
\newblock In \emph{9th International Conference on Learning Representations,
  {ICLR} 2021, Virtual Event, Austria, May 3-7, 2021}. OpenReview.net, 2021.
\newblock URL \url{https://openreview.net/forum?id=01olnfLIbD}.

\bibitem[Goldblum et~al.(2020)Goldblum, Tsipras, Xie, Chen, Schwarzschild,
  Song, Madry, Li, and Goldstein]{DatasetSecurity}
Micah Goldblum, Dimitris Tsipras, Chulin Xie, Xinyun Chen, Avi Schwarzschild,
  Dawn Song, Aleksander Madry, Bo~Li, and Tom Goldstein.
\newblock Dataset security for machine learning: Data poisoning, backdoor
  attacks, and defenses.
\newblock \emph{CoRR}, abs/2012.10544, 2020.
\newblock URL \url{https://arxiv.org/abs/2012.10544}.

\bibitem[Hong et~al.(2020)Hong, Chandrasekaran, Kaya, Dumitras, and
  Papernot]{GradientShaping}
Sanghyun Hong, Varun Chandrasekaran, Yigitcan Kaya, Tudor Dumitras, and Nicolas
  Papernot.
\newblock On the effectiveness of mitigating data poisoning attacks with
  gradient shaping.
\newblock \emph{CoRR}, abs/2002.11497, 2020.
\newblock URL \url{https://arxiv.org/abs/2002.11497}.

\bibitem[Jia et~al.(2020)Jia, Cao, and Gong]{RobustNN}
Jinyuan Jia, Xiaoyu Cao, and Neil~Zhenqiang Gong.
\newblock Certified robustness of nearest neighbors against data poisoning
  attacks.
\newblock \emph{CoRR}, abs/2012.03765, 2020.
\newblock URL \url{https://arxiv.org/abs/2012.03765}.

\bibitem[Jia et~al.(2021)Jia, Cao, and Gong]{Bagging}
Jinyuan Jia, Xiaoyu Cao, and Neil~Zhenqiang Gong.
\newblock Intrinsic certified robustness of bagging against data poisoning
  attacks.
\newblock In \emph{Thirty-Fifth {AAAI} Conference on Artificial Intelligence,
  {AAAI} 2021, Thirty-Third Conference on Innovative Applications of Artificial
  Intelligence, {IAAI} 2021, The Eleventh Symposium on Educational Advances in
  Artificial Intelligence, {EAAI} 2021, Virtual Event, February 2-9, 2021},
  pages 7961--7969. {AAAI} Press, 2021.
\newblock URL \url{https://ojs.aaai.org/index.php/AAAI/article/view/16971}.

\bibitem[Koh et~al.(2022)Koh, Steinhardt, and Liang]{KSL17}
Pang~Wei Koh, Jacob Steinhardt, and Percy Liang.
\newblock Stronger data poisoning attacks break data sanitization defenses.
\newblock \emph{Mach. Learn.}, 111\penalty0 (1):\penalty0 1--47, 2022.
\newblock \doi{10.1007/s10994-021-06119-y}.
\newblock URL \url{https://doi.org/10.1007/s10994-021-06119-y}.

\bibitem[Levine and Feizi(2021)]{DPA}
Alexander Levine and Soheil Feizi.
\newblock Deep partition aggregation: Provable defenses against general
  poisoning attacks.
\newblock In \emph{9th International Conference on Learning Representations,
  {ICLR} 2021, Virtual Event, Austria, May 3-7, 2021}. OpenReview.net, 2021.

\bibitem[Liu et~al.(2019)Liu, Ott, Goyal, Du, Joshi, Chen, Levy, Lewis,
  Zettlemoyer, and Stoyanov]{liu2019roberta}
Yinhan Liu, Myle Ott, Naman Goyal, Jingfei Du, Mandar Joshi, Danqi Chen, Omer
  Levy, Mike Lewis, Luke Zettlemoyer, and Veselin Stoyanov.
\newblock Roberta: A robustly optimized bert pretraining approach.
\newblock \emph{arXiv preprint arXiv:1907.11692}, 2019.

\bibitem[Loshchilov and Hutter(2017)]{AdamW}
Ilya Loshchilov and Frank Hutter.
\newblock Fixing weight decay regularization in adam.
\newblock \emph{CoRR}, abs/1711.05101, 2017.
\newblock URL \url{http://arxiv.org/abs/1711.05101}.

\bibitem[Ma et~al.(2019)Ma, Zhu, and Hsu]{DP_defense}
Yuzhe Ma, Xiaojin Zhu, and Justin Hsu.
\newblock Data poisoning against differentially-private learners: Attacks and
  defenses.
\newblock In Sarit Kraus, editor, \emph{Proceedings of the Twenty-Eighth
  International Joint Conference on Artificial Intelligence, {IJCAI} 2019,
  Macao, China, August 10-16, 2019}, pages 4732--4738. ijcai.org, 2019.
\newblock \doi{10.24963/ijcai.2019/657}.
\newblock URL \url{https://doi.org/10.24963/ijcai.2019/657}.

\bibitem[Misra(2022)]{HuffPost_dataset}
Rishabh Misra.
\newblock News category dataset.
\newblock \emph{CoRR}, abs/2209.11429, 2022.
\newblock \doi{10.48550/arXiv.2209.11429}.
\newblock URL \url{https://doi.org/10.48550/arXiv.2209.11429}.

\bibitem[Misra and Grover(2021)]{misra2021sculpting}
Rishabh Misra and Jigyasa Grover.
\newblock \emph{Sculpting Data for ML: The first act of Machine Learning}.
\newblock 01 2021.
\newblock ISBN 9798585463570.

\bibitem[Mu{\~{n}}oz{-}Gonz{\'{a}}lez et~al.(2017)Mu{\~{n}}oz{-}Gonz{\'{a}}lez,
  Biggio, Demontis, Paudice, Wongrassamee, Lupu, and Roli]{Luis}
Luis Mu{\~{n}}oz{-}Gonz{\'{a}}lez, Battista Biggio, Ambra Demontis, Andrea
  Paudice, Vasin Wongrassamee, Emil~C. Lupu, and Fabio Roli.
\newblock Towards poisoning of deep learning algorithms with back-gradient
  optimization.
\newblock In Bhavani Thuraisingham, Battista Biggio, David~Mandell Freeman,
  Brad Miller, and Arunesh Sinha, editors, \emph{Proceedings of the 10th {ACM}
  Workshop on Artificial Intelligence and Security, AISec@CCS 2017, Dallas, TX,
  USA, November 3, 2017}, pages 27--38. {ACM}, 2017.
\newblock \doi{10.1145/3128572.3140451}.
\newblock URL \url{https://doi.org/10.1145/3128572.3140451}.

\bibitem[Pan et~al.(2023)Pan, Zeng, Lyu, Lin, and Jia]{ASSET_detect}
Minzhou Pan, Yi~Zeng, Lingjuan Lyu, Xue Lin, and Ruoxi Jia.
\newblock {ASSET:} robust backdoor data detection across a multiplicity of deep
  learning paradigms.
\newblock \emph{CoRR}, abs/2302.11408, 2023.
\newblock \doi{10.48550/arXiv.2302.11408}.
\newblock URL \url{https://doi.org/10.48550/arXiv.2302.11408}.

\bibitem[Paszke et~al.(2019)Paszke, Gross, Massa, Lerer, Bradbury, Chanan,
  Killeen, Lin, Gimelshein, Antiga, Desmaison, K{\"{o}}pf, Yang, DeVito,
  Raison, Tejani, Chilamkurthy, Steiner, Fang, Bai, and Chintala]{torch}
Adam Paszke, Sam Gross, Francisco Massa, Adam Lerer, James Bradbury, Gregory
  Chanan, Trevor Killeen, Zeming Lin, Natalia Gimelshein, Luca Antiga, Alban
  Desmaison, Andreas K{\"{o}}pf, Edward~Z. Yang, Zach DeVito, Martin Raison,
  Alykhan Tejani, Sasank Chilamkurthy, Benoit Steiner, Lu~Fang, Junjie Bai, and
  Soumith Chintala.
\newblock Pytorch: An imperative style, high-performance deep learning library.
\newblock \emph{CoRR}, abs/1912.01703, 2019.
\newblock URL \url{http://arxiv.org/abs/1912.01703}.

\bibitem[Paudice et~al.(2018)Paudice, Mu{\~{n}}oz{-}Gonz{\'{a}}lez,
  Gy{\"{o}}rgy, and Lupu]{AnomalyDetect}
Andrea Paudice, Luis Mu{\~{n}}oz{-}Gonz{\'{a}}lez, Andr{\'{a}}s Gy{\"{o}}rgy,
  and Emil~C. Lupu.
\newblock Detection of adversarial training examples in poisoning attacks
  through anomaly detection.
\newblock \emph{CoRR}, abs/1802.03041, 2018.
\newblock URL \url{http://arxiv.org/abs/1802.03041}.

\bibitem[Rezaei et~al.(2023)Rezaei, Banihashem, Chegini, and Feizi]{ROE}
Keivan Rezaei, Kiarash Banihashem, Atoosa~Malemir Chegini, and Soheil Feizi.
\newblock Run-off election: Improved provable defense against data poisoning
  attacks.
\newblock \emph{CoRR}, abs/2302.02300, 2023.
\newblock \doi{10.48550/arXiv.2302.02300}.
\newblock URL \url{https://doi.org/10.48550/arXiv.2302.02300}.

\bibitem[Rosenfeld et~al.(2020)Rosenfeld, Winston, Ravikumar, and
  Kolter]{RS_LabelFlipping}
Elan Rosenfeld, Ezra Winston, Pradeep Ravikumar, and J.~Zico Kolter.
\newblock Certified robustness to label-flipping attacks via randomized
  smoothing.
\newblock In \emph{Proceedings of the 37th International Conference on Machine
  Learning, {ICML} 2020, 13-18 July 2020, Virtual Event}, volume 119 of
  \emph{Proceedings of Machine Learning Research}, pages 8230--8241. {PMLR},
  2020.
\newblock URL \url{http://proceedings.mlr.press/v119/rosenfeld20b.html}.

\bibitem[Rubinstein et~al.(2009)Rubinstein, Nelson, Huang, Joseph, Lau, Rao,
  and Taft]{rubinstein2009antidote}
Benjamin~IP Rubinstein, Blaine Nelson, Ling Huang, Anthony~D Joseph, Shing-hon
  Lau, Satish Rao, and Nina Taft.
\newblock Antidote: understanding and defending against poisoning of anomaly
  detectors.
\newblock In \emph{Proceedings of the 9th ACM SIGCOMM Conference on Internet
  Measurement}, pages 1--14, 2009.

\bibitem[Saha et~al.(2020)Saha, Subramanya, and Pirsiavash]{HT_backdoor}
Aniruddha Saha, Akshayvarun Subramanya, and Hamed Pirsiavash.
\newblock Hidden trigger backdoor attacks.
\newblock In \emph{The Thirty-Fourth {AAAI} Conference on Artificial
  Intelligence, {AAAI} 2020, The Thirty-Second Innovative Applications of
  Artificial Intelligence Conference, {IAAI} 2020, The Tenth {AAAI} Symposium
  on Educational Advances in Artificial Intelligence, {EAAI} 2020, New York,
  NY, USA, February 7-12, 2020}, pages 11957--11965. {AAAI} Press, 2020.
\newblock URL \url{https://aaai.org/ojs/index.php/AAAI/article/view/6871}.

\bibitem[Shafahi et~al.(2018)Shafahi, Huang, Najibi, Suciu, Studer, Dumitras,
  and Goldstein]{FeatureCollision}
Ali Shafahi, W.~Ronny Huang, Mahyar Najibi, Octavian Suciu, Christoph Studer,
  Tudor Dumitras, and Tom Goldstein.
\newblock Poison frogs! targeted clean-label poisoning attacks on neural
  networks.
\newblock In Samy Bengio, Hanna~M. Wallach, Hugo Larochelle, Kristen Grauman,
  Nicol{\`{o}} Cesa{-}Bianchi, and Roman Garnett, editors, \emph{Advances in
  Neural Information Processing Systems 31: Annual Conference on Neural
  Information Processing Systems 2018, NeurIPS 2018, December 3-8, 2018,
  Montr{\'{e}}al, Canada}, pages 6106--6116, 2018.
\newblock URL
  \url{https://proceedings.neurips.cc/paper/2018/hash/22722a343513ed45f14905eb07621686-Abstract.html}.

\bibitem[Shan et~al.(2022)Shan, Bhagoji, Zheng, and Zhao]{shan2022poison}
Shawn Shan, Arjun~Nitin Bhagoji, Haitao Zheng, and Ben~Y Zhao.
\newblock Poison forensics: Traceback of data poisoning attacks in neural
  networks.
\newblock In \emph{31st USENIX Security Symposium (USENIX Security 22)}, pages
  3575--3592, 2022.

\bibitem[Souri et~al.(2021)Souri, Goldblum, Fowl, Chellappa, and
  Goldstein]{sleeper_agent}
Hossein Souri, Micah Goldblum, Liam Fowl, Rama Chellappa, and Tom Goldstein.
\newblock Sleeper agent: Scalable hidden trigger backdoors for neural networks
  trained from scratch.
\newblock \emph{CoRR}, abs/2106.08970, 2021.
\newblock URL \url{https://arxiv.org/abs/2106.08970}.

\bibitem[Steinhardt et~al.(2017)Steinhardt, Koh, and Liang]{SteinhardtKL17}
Jacob Steinhardt, Pang~Wei Koh, and Percy Liang.
\newblock Certified defenses for data poisoning attacks.
\newblock In Isabelle Guyon, Ulrike von Luxburg, Samy Bengio, Hanna~M. Wallach,
  Rob Fergus, S.~V.~N. Vishwanathan, and Roman Garnett, editors, \emph{Advances
  in Neural Information Processing Systems 30: Annual Conference on Neural
  Information Processing Systems 2017, December 4-9, 2017, Long Beach, CA,
  {USA}}, pages 3517--3529, 2017.
\newblock URL
  \url{https://proceedings.neurips.cc/paper/2017/hash/9d7311ba459f9e45ed746755a32dcd11-Abstract.html}.

\bibitem[Tran et~al.(2018)Tran, Li, and Madry]{SpectralSignature}
Brandon Tran, Jerry Li, and Aleksander Madry.
\newblock Spectral signatures in backdoor attacks.
\newblock In Samy Bengio, Hanna~M. Wallach, Hugo Larochelle, Kristen Grauman,
  Nicol{\`{o}} Cesa{-}Bianchi, and Roman Garnett, editors, \emph{Advances in
  Neural Information Processing Systems 31: Annual Conference on Neural
  Information Processing Systems 2018, NeurIPS 2018, December 3-8, 2018,
  Montr{\'{e}}al, Canada}, pages 8011--8021, 2018.
\newblock URL
  \url{https://proceedings.neurips.cc/paper/2018/hash/280cf18baf4311c92aa5a042336587d3-Abstract.html}.

\bibitem[Turner et~al.(2019)Turner, Tsipras, and Madry]{LC_backdoor}
Alexander Turner, Dimitris Tsipras, and Aleksander Madry.
\newblock Label-consistent backdoor attacks.
\newblock \emph{CoRR}, abs/1912.02771, 2019.
\newblock URL \url{http://arxiv.org/abs/1912.02771}.

\bibitem[Wang et~al.(2020)Wang, Cao, Jia, and Gong]{BackdoorRS}
Binghui Wang, Xiaoyu Cao, Jinyuan Jia, and Neil~Zhenqiang Gong.
\newblock On certifying robustness against backdoor attacks via randomized
  smoothing.
\newblock \emph{CoRR}, abs/2002.11750, 2020.
\newblock URL \url{https://arxiv.org/abs/2002.11750}.

\bibitem[Wang et~al.(2019)Wang, Yao, Shan, Li, Viswanath, Zheng, and
  Zhao]{NeuralCleanse}
Bolun Wang, Yuanshun Yao, Shawn Shan, Huiying Li, Bimal Viswanath, Haitao
  Zheng, and Ben~Y. Zhao.
\newblock Neural cleanse: Identifying and mitigating backdoor attacks in neural
  networks.
\newblock In \emph{2019 {IEEE} Symposium on Security and Privacy, {SP} 2019,
  San Francisco, CA, USA, May 19-23, 2019}, pages 707--723. {IEEE}, 2019.
\newblock \doi{10.1109/SP.2019.00031}.
\newblock URL \url{https://doi.org/10.1109/SP.2019.00031}.

\bibitem[Wang and Feizi(2023)]{PracticalAggregationDef}
Wenxiao Wang and Soheil Feizi.
\newblock On practical aspects of aggregation defenses against data poisoning
  attacks, 2023.

\bibitem[Wang et~al.(2022{\natexlab{a}})Wang, Levine, and Feizi]{FA}
Wenxiao Wang, Alexander Levine, and Soheil Feizi.
\newblock Improved certified defenses against data poisoning with
  (deterministic) finite aggregation.
\newblock \emph{CoRR}, abs/2202.02628, 2022{\natexlab{a}}.

\bibitem[Wang et~al.(2022{\natexlab{b}})Wang, Levine, and Feizi]{LethalDose}
Wenxiao Wang, Alexander Levine, and Soheil Feizi.
\newblock Lethal dose conjecture on data poisoning.
\newblock \emph{CoRR}, abs/2208.03309, 2022{\natexlab{b}}.
\newblock \doi{10.48550/arXiv.2208.03309}.
\newblock URL \url{https://doi.org/10.48550/arXiv.2208.03309}.

\bibitem[Weber et~al.(2020)Weber, Xu, Karlas, Zhang, and Li]{RAB}
Maurice Weber, Xiaojun Xu, Bojan Karlas, Ce~Zhang, and Bo~Li.
\newblock {RAB:} provable robustness against backdoor attacks.
\newblock \emph{CoRR}, abs/2003.08904, 2020.
\newblock URL \url{https://arxiv.org/abs/2003.08904}.

\bibitem[Zeng et~al.(2021{\natexlab{a}})Zeng, Chen, Park, Mao, Jin, and
  Jia]{Adv_Unlearning}
Yi~Zeng, Si~Chen, Won Park, Z.~Morley Mao, Ming Jin, and Ruoxi Jia.
\newblock Adversarial unlearning of backdoors via implicit hypergradient.
\newblock \emph{CoRR}, abs/2110.03735, 2021{\natexlab{a}}.
\newblock URL \url{https://arxiv.org/abs/2110.03735}.

\bibitem[Zeng et~al.(2021{\natexlab{b}})Zeng, Park, Mao, and Jia]{freq_trigger}
Yi~Zeng, Won Park, Z.~Morley Mao, and Ruoxi Jia.
\newblock Rethinking the backdoor attacks' triggers: {A} frequency perspective.
\newblock \emph{CoRR}, abs/2104.03413, 2021{\natexlab{b}}.
\newblock URL \url{https://arxiv.org/abs/2104.03413}.

\bibitem[Zeng et~al.(2022{\natexlab{a}})Zeng, Pan, Jahagirdar, Jin, Lyu, and
  Jia]{SiftOut}
Yi~Zeng, Minzhou Pan, Himanshu Jahagirdar, Ming Jin, Lingjuan Lyu, and Ruoxi
  Jia.
\newblock How to sift out a clean data subset in the presence of data
  poisoning?
\newblock \emph{CoRR}, abs/2210.06516, 2022{\natexlab{a}}.
\newblock \doi{10.48550/arXiv.2210.06516}.
\newblock URL \url{https://doi.org/10.48550/arXiv.2210.06516}.

\bibitem[Zeng et~al.(2022{\natexlab{b}})Zeng, Pan, Just, Lyu, Qiu, and
  Jia]{Narcissus}
Yi~Zeng, Minzhou Pan, Hoang~Anh Just, Lingjuan Lyu, Meikang Qiu, and Ruoxi Jia.
\newblock Narcissus: {A} practical clean-label backdoor attack with limited
  information.
\newblock \emph{CoRR}, abs/2204.05255, 2022{\natexlab{b}}.
\newblock \doi{10.48550/arXiv.2204.05255}.
\newblock URL \url{https://doi.org/10.48550/arXiv.2204.05255}.

\bibitem[Zhu et~al.(2019)Zhu, Huang, Li, Taylor, Studer, and
  Goldstein]{ConvexPolytope}
Chen Zhu, W.~Ronny Huang, Hengduo Li, Gavin Taylor, Christoph Studer, and Tom
  Goldstein.
\newblock Transferable clean-label poisoning attacks on deep neural nets.
\newblock In Kamalika Chaudhuri and Ruslan Salakhutdinov, editors,
  \emph{Proceedings of the 36th International Conference on Machine Learning,
  {ICML} 2019, 9-15 June 2019, Long Beach, California, {USA}}, volume~97 of
  \emph{Proceedings of Machine Learning Research}, pages 7614--7623. {PMLR},
  2019.
\newblock URL \url{http://proceedings.mlr.press/v97/zhu19a.html}.

\end{thebibliography}
\nocite{*}

\end{document}